\documentclass[twocolumn]{article}

\usepackage[american]{babel}
\usepackage{natbib} 
\usepackage{mathtools} 
\usepackage{booktabs} 
\usepackage{tikz} 
\usepackage{subcaption}
\usepackage{authblk}
\usepackage[linesnumbered,ruled,vlined]{algorithm2e}
\usepackage{hyperref}

\bibliographystyle{plainnat}

\newtheorem{theorem}{Theorem}
\newtheorem{corollary}{Corollary}

\newtheorem{definition}{Definition}
\newtheorem{proof}{Proof}

\title{Choosing DAG Models Using Markov and Minimal Edge Count in the Absence of Ground Truth}

\author[1]{Joseph D. Ramsey}
\author[2]{Bryan Andrews}
\author[1]{Peter Spirtes}
\affil[1]{Carnegie Mellon University Philosophy Department }
\affil[2]{University of Minnesota Department of Psychiatry \& Behavioral Sciences }
  
\begin{document}

\maketitle

\begin{abstract}
We give a novel nonparametric pointwise consistent statistical test (the Markov Checker) of the Markov condition for directed acyclic graph (DAG) or completed partially directed acyclic graph (CPDAG) models given a dataset. We also introduce the Cross-Algorithm Frugality Search (CAFS) for rejecting DAG models that either do not pass the Markov Checker test or that are not edge minimal. Edge minimality has been used previously by Raskutti and Uhler as a nonparametric simplicity criterion, though CAFS readily generalizes to other simplicity conditions. Reference to the ground truth is not necessary for CAFS, so it is useful for finding causal structure learning algorithms and tuning parameter settings that output causal models that are approximately true from a given data set. We provide a software tool for this analysis that is suitable for even quite large or dense models, provided a suitably fast pointwise consistent test of conditional independence is available. In addition, we show in simulation that the CAFS procedure can pick approximately correct models without knowing the ground truth.
\end{abstract}

\section{Introduction}\label{sec:intro}

Directed acyclic graph (DAG) data models are often published, making modeling claims for which little justification exists, other than that they are produced by a ``reliable'' algorithm. Claims about ``reliability'' are typically justified in two different ways: theorems that state that under suitable assumptions the algorithm is correct in the large sample limit and/or simulation studies. However, both types of reliability indicator may not be applicable to a particular algorithm or search parameter choice on given data. Simulations routinely make assumptions that real data violate (e.g., no latent confounders, no selection bias, no measurement error, i.i.d. sampling, etc.). Real data may require, for correctness, assumptions that are not simultaneously satisfied by \textit{any} existing algorithm, and when these implicit assumptions are violated, errors can occur. Accordingly, algorithms may only be reliable in certain contexts, primarily simulation contexts, in aggregate, and perhaps not in a context suitable for analyzing a specific dataset. 

In the case of theorems guaranteeing the correctness of causal structure learning algorithms, the theorems typically either only guarantee correctness in the large sample limit \citep{spirtes2000causation, chickering2002optimal}, or they make implausibly strong assumptions \citep{kalisch2007estimating, uhler2013geometry}, or at finite sample sizes they guarantee only very loose bounds on the probability of being correct (Spirtes and Zhang, Spirtes and Wang). Correctness in the large-sample limit does not guarantee success at any given finite sample size. Thus, in many (but not all) cases, violations of assumptions and/or finite sample sizes lead algorithms to produce output causal graphs that are not Markov to the population distribution. 

A remedy is to find ways to verify the specific claims implied by the learned DAG model. Two assumptions that are necessary for the correctness of a wide variety of causal structure learning algorithms are the Causal Markov Assumption (CMA) and a simplicity assumption. The Causal Markov Assumption states that given a causal graph that is a directed acyclic graph and that contains a set of variables for which there are no unmeasured confounders, each vertex is independent of its nondescendants conditional on its parents. It has several equivalent formulations, including that if a set of variables \textbf{X} is d-separated from a set of variables \textbf{Y}, conditional on a set of variables \textbf{Z} (where \textbf{X}, \textbf{Y}, and \textbf{Z}) are disjoint) then \textbf{X} is independent of \textbf{Y} conditional on \textbf{Z} in the population distribution. There are several different simplicity assumptions commonly made, but one that is recommended by Raskutti and Uhler is the edge minimality assumption, that is, no DAG that is Markov to the population distribution has fewer edges than the true causal DAG \cite{raskutti2018learning}. The Markov Checker does a statistical test of whether d-separations in the learned causal model are conditionally independent in the population distribution by getting p-values for statistical tests of conditional independence entailed by the output model and then checking whether the p-values are uniformly distributed, as theory requires to be the case under the null assumption that the output model satisfies the Causal Markov Assumption. CAFS obtains search results from various algorithms under various search parameter settings, and eliminating ones that fail the Markov Checker, and further eliminating ones that don't have the minimal number of edges among the models that pass the Markov Checker. 

This paper proceeds as follows. First, we review the Markov condition, after which we describe Raskutti and Uhler's recommendations and show how they can be repurposed beyond the specific algorithm they propose, to compare the output of different algorithms and different search parameters. We then describe our statistical test of whether the Markov condition holds for a given DAG or CPDAG and the population distribution, and discuss its implementation. We follow this with simulation and real data results showing how one can use the Markov Checker to aid in selecting good models without knowing the ground truth and the limitations of the Markov Checker. We discuss how this expands the toolkit available to domain experts for analyzing real data. Finally, we give some conclusions and prospects.

Our novel contributions are as follows.
\begin{itemize}
    \item We provide the Markov Checker, a nonparametric pointwise consistent test for DAG or CPDAG models of whether or not the Markov condition holds.
    \item We provide a software tool, the Markov Checker, which helps, using the CAFS algorithm, to identify simplest models passing a Markov condition, usable on multiple platforms.
    \item We show how this software can be used to select accurate models from a list of algorithm outputs containing an accurate model and can be used to tune parameter settings, without knowing the ground truth.
     \item We describe limitations of the Markov Checker and CAFS.
    
\end{itemize}

\section{Preliminaries}

We define some terms that we will need. A \textit{graph} $G$ is an ordered pair $\langle E, V \rangle$, where \textit{V} is a set of \textit{vertices}, and \textit{E} is a set of \textit{edges} over $V$, and each edge $\langle x, y \rangle$ is either \textit{directed} from $x$ to $y$ with a \textit{tail} at $x$ and and \textit{arrow} at $y$ (we represent these as $x \rightarrow y$) or \textit{undirected} with a tail at both $x$ and $y$ (we represent these as $x - y$. We denote vertices with lowercase letters, and we denote sets of variables with uppercase letters. We refer to the arrows and tails in a graph as \textit{endpoint markings}. An \textit{adjacency} is an edge in a graph regardless of endpoint markings. A path is a list $\langle x_1, x_2, \dots, x_n \rangle$ in $G$ where $\langle x_i, x_{i+1} \rangle$ is an edge in $G$ for each $i$. A \textit{directed path} is a path where $\langle x_i, x_{i+1} \rangle$ is a directed edge in $G$ from $x_i$ to $x_{i + 1}$ for each $i$. A \textit{cycle} is a directed path from a node to itself.  A \textit{directed graph} is a graph with only directed edges. A \textit{directed acyclic graph} (DAG) is a directed graph without cycles. A \textit{collider} is a path of the form $x \rightarrow y \leftarrow z$. A \textit{noncollider} is a path $\langle x, y, z \rangle$ that is not a collider. A \textit{v-structure} is a collider $x \rightarrow y \leftarrow z$ where there is no adjacency between $x$ and $z$ in the graph; if there is an adjacency between \textit{x} and \textit{y}, we say that the collider is \textit{covered}. A node $a$ is an ancestor of $b$ if there is a direct path from $a$ to $b$ or $a$ = $b$; we also say that $b$ is a descendent of $a$ in this case.

A path from $x$ to $y$ in a DAG is \textit{blocked} given a set \textit{Z} of variables if it contains a collider with no descendant in \textit{Z} and is \textit{unblocked} otherwise. We say that \textit{X} and \textit{Y} are \textit{d-connected} just in case some path from a member of \textit{X} to a member of \textit{Y} is unblocked and \textit{d-separated} otherwise (which we notate as $dsep(x, y \mid Z)$.

We say that $X$ and $Y$ are \textit{conditionally independent} in $P$ (denoted $I(x, y \mid Z)$ given $Z$ iff $Pr(X \mid Y, Z) = Pr(X \mid Z)$ in $P$. Note that  if \textit{X} is independent of \textit{Y} conditional on \textit{Z} that it follows that each \textit{x} in \textit{X} is independent of each \textit{y} in \textit{Y} conditional on \textit{Z}. However, the converse is \textit{not} true in all cases, although it is true for Gaussian distributions. 

We may associate with a set of vertices that are random variables in a graph $G$ a \textit{distribution} $P$; We say that $G$ is \textit{Markov} for $P$ just in case $dsep(x, y \mid Z)$ implies that $I(x, y \mid Z)$ in $P$ and \textit{faithful} to $P$ just in case $I(x, y \mid Z)$ in $P$ implies that $dsep(x, y \mid Z)$ in $G$. In this way, a DAG $G$ implies, under the Markov relation, a set of conditional independencies. DAGs that imply the same set of conditional independencies under the Markov relation (i.e., have the same d-separation relations) are in an equivalence class we call a \textit{Markov equivalence class}. All DAGs in a Markov equivalence class can be represented by a \textit{completed partially directed acyclic graph} (CPDAG) which are graphs that can contain both directed and undirected edges. A CPDAG has the same adjacencies as each DAG in the Markov equivalence class it represents, and a directed edge from \textit{x} to \textit{y} if every DAG in the Markov equivalence class has an edge from \textit{x} to \textit{y}, and an undirected edge between \textit{x} and \textit{y} otherwise; see \cite{spirtes2000causation}.

We may draw a sample dataset $D$ from $P$ by selecting units independently that are distributed as $P$. Variables that are in a dataset we call \textit{measured}; variables not in a dataset we call \textit{latent}. If a latent variable $l$ is such that $x \leftarrow l \rightarrow y$ for variables in our data, we call $l$ a \textit{latent common cause}. We say that the assumption of \textit{ causal sufficiency} is satisfied for a data set $D$ just in the case that there are no latent common causes for $D$; otherwise we say that \textit{causal insufficiency} holds. For the purposes of this paper, we will assume that data sets are causally sufficient, though we will make some remarks about analyses that might be performed under causal insufficiency.

We assume that we have available pointwise consistent tests of conditional independence for the class of distributions under consideration; we will restrict ourselves for purposes of exposition to conditional correlation for linear, Gaussian models, but there are also nonparametric tests of conditional independence (e.g., \cite{zhang2012kernelbasedconditionalindependencetest}). Such tests generate \textit{p-values} in the usual way. For these tests, we can specify a \textit{significance threshold}--that is, a lower bound on p-values that are judged as conditionally independent--though such a threshold will not be used for our purposes.

\section{The Markov Condition}

We may further remark on the Markov relationship mentioned above. Usually, one makes the Causal Markov Assumption in algorithm construction for a true unknown causal graph. If the Causal Markov Assumption is true, and the output of causal search algorithm is correct, then the output CPDAG will be Markov to the population distribution. However, the Markov condition (MC) itself \citep{spirtes2000causation} is an assumption that can be checked for any DAG. For DAG models, we can state the Markov condition in terms of a relationship between d-separation claims for a DAG (or CPDAG) $G$ and independence facts that hold in a probability distribution $P$ over the variables in $G$

\begin{definition}[Global MC]
$(G, P)$ satisfies the Markov condition if, for sets $X, Y$ and $Z$, $dsep(X, Y \mid Z)$ in $G$ implies that $I(X, Y \mid Z)$ in $P$.
\end{definition}

Since every DAG represented by a CPDAG has the same set of d-separation relations, by extension we can also say that if \textit{G} is a CPDAG, then $(G, P)$ satisfies the Markov condition if, for for every DAG \textit{H} represented by \textit{G}, sets $X, Y$ and $Z$, $dsep(X, Y \mid Z)$ in $H$ implies that $I(X, Y \mid Z)$ in $P$.

In contrast, the Causal Markov Condition (CMC) is as follows. 

\begin{definition}[CMC]
$(G*, P)$ satisfies the Causal Markov condition if (i) $G^*$ is the true causal DAG for $P$ and (ii) $(G^*, P)$ satisfies MC.
\end{definition}

A local version of MC can be given as follows \citep{spirtes2000causation}.

\begin{definition}[Local MC]
$(G, P)$ satisfies the local Markov condition if every variable in $G$ is independent of its non-descendants given its direct parents.
\end{definition}

There is an advantage in calculating local Markov with respect to the given order of the variables, where a \textit{valid order} of the variables is a permutation $\langle x_1, x_2, \dots, x_n \rangle$ of the variables where parents of each node in the CPDAG occur before that node in the permutation.

\begin{definition}[Ordered Local MC]
$(G, P)$ satisfies the Ordered Local Markov condition with respect to a valid order of the variables of $G$ if every variable $x$ in $G$ is independent of its nonparents that precede it in the valid order given its direct parents.
\end{definition}

In the following, ``MC'' will refer to Ordered Local MC. The converse assumption to CMC is the Causal Faithfulness Condition (CFC), which may be put this way:

\begin{definition}[CFC]
$(G^*, P)$ satisfies the Causal Faithfulness condition if (i) $G^*$ is the true causal graph for $P$ and (ii) for each $X, Y$ and $Z$, $\neg dsep(X, Y \mid Z)$ implies $\neg I(X, Y \mid Z)$.
\end{definition}

The CMC is fairly well established, although there are some cases where it may fail \citep{zhang2008detection}. Violations of CFC may reflect path cancellation and may not indicate errors in the causal model itself\footnote{See \cite{lam2022greedy} for an oracle study.}. Under fairly weak assumptions, this can happen only with sets of parameter values of Lebesgue measure zero \citep{spirtes2000causation}, though ``almost violation of faithfulness'' can occur with positive probability
(\citep{uhler2013geometry}).

If CMC is true and MC does not hold for a graphical model $G$ given a data distribution $P$, then $G$ cannot be in $M(G^*)$. 

Note that if v-structures in the true model are mistakenly rendered as noncolliders in the estimated model, then they must be covered in the estimated model for the estimated model to satisfy Markov, as suggested by this lemma from \cite{chickering2002optimal}:

\begin{theorem} [Chickering Lemma 28]
Let $G$ and $H$ be two DAGs such that the set of conditional independences implied by $G$ is a subset of the set of conditional independences implied by H, that is, $G$ is an \textit{I-map} of $H$. If $G$ contains the v-structure $X \rightarrow Z \leftarrow Y$, then either H contains the same v-structure or $X$ and $Y$ are adjacent in $H$.
\end{theorem}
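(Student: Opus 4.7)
My plan is to prove the contrapositive: assume that $X$ and $Y$ are non-adjacent in $H$ and that $H$ does not contain the v-structure $X \to Z \leftarrow Y$, and derive a contradiction from the I-map relation between the two graphs. The underlying intuition is that the v-structure in $G$ forces a very particular pattern of d-separations --- $X$ and $Y$ separable by some set that avoids $Z$, yet d-connected once $Z$ is added to the conditioning set --- and this pattern has to be compatible with the d-separations of $H$.

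The first move is to use the non-adjacency of $X$ and $Y$ in $H$ to pick a set $S$ that d-separates $X$ and $Y$ in $H$, so that $X \perp Y \mid S$ is an independence implied by $H$. Transporting this CI across the I-map inclusion gives that $S$ also d-separates $X$ and $Y$ in $G$. Now, since the path $X \to Z \leftarrow Y$ exists in $G$ and must be blocked by $S$, and $Z$ is a collider on that path, the collider-blocking rule forces $Z \notin S$ and no $G$-descendant of $Z$ in $S$.

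Next I would run an exhaustive case analysis on how $H$ can fail to contain $X \to Z \leftarrow Y$. If $X$ and $Z$ are non-adjacent in $H$, then $H$ implies some CI $X \perp Z \mid T$, but the edge $X \to Z$ in $G$ leaves $X$ and $Z$ d-connected under every conditioning set, contradicting the I-map relation. The case of $Y, Z$ non-adjacent is symmetric. The only remaining possibility is that both $X, Z$ and $Y, Z$ are adjacent in $H$ but $Z$ is a non-collider on the path $X - Z - Y$; here, since $Z \notin S$, a non-collider at $Z$ leaves that two-step path d-connecting in $H$, contradicting that $S$ d-separates $X$ and $Y$ in $H$. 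Eliminating every case forces $H$ to contain the v-structure.

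I expect the main hazard to be the double role of the set $S$: the same set has to witness d-separation in $H$ (which is what rules out the non-collider case) and d-separation in $G$ (which is what constrains $Z$ and its $G$-descendants via the collider path). Keeping the two roles of $S$ straight, and in particular being careful that ``descendants of $Z$'' means $G$-descendants in one appeal and $H$-descendants in the other, is the main bookkeeping concern; the core structural content is short once the case analysis is laid out.
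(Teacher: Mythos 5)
The paper never proves this result---it is quoted from \cite{chickering2002optimal}---so there is no in-paper argument to compare against; your argument is the standard textbook proof of that lemma. Its skeleton (contrapositive, pick a separating set $S$ for the non-adjacent pair in $H$, transport, force $Z \notin S$, then case-split on how the v-structure could fail in $H$) is sound, and your closing observation that $Z \notin S$ is a fact about the set $S$ that can be reused on the $H$-side is exactly the right piece of bookkeeping.

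There is, however, one substantive mismatch you should make explicit: every appeal you make to the I-map hypothesis runs in the direction $I(H) \subseteq I(G)$ (``every independence implied by $H$ also holds in $G$''), whereas the statement as printed asserts the opposite inclusion $I(G) \subseteq I(H)$. Transporting $X \perp Y \mid S$ from $H$ to $G$, and deriving a contradiction from $X \perp Z \mid T$ being implied by $H$ while the edge $X \rightarrow Z$ keeps $X$ and $Z$ d-connected in $G$, both require that independences of $H$ hold in $G$; the printed inclusion gives no such guarantee. Indeed, with the inclusion as printed the claim is false: take $G$ to be $X \rightarrow Z \leftarrow Y$ and $H$ the empty graph on $\{X, Y, Z\}$; then $I(G) = \{X \perp Y\} \subseteq I(H)$, yet $H$ contains neither the v-structure nor the adjacency. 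What you have proved is the intended version, matching Chickering's original Lemma 28 (where $H$ is the independence map of $G$), which is also the version the paper actually needs for its application: an estimated model $H$ that is Markov to a distribution faithful to $G^*$ satisfies $I(H) \subseteq I(G^*)$. So the defect lies in the statement rather than in your argument, but a complete write-up must state which inclusion it is using and note the discrepancy; otherwise the very first ``transport'' step is, as literally licensed by the hypothesis, invalid. One last small point: your case analysis omits the fourth way $H$ could fail to contain the v-structure, namely a shielded collider $X \rightarrow Z \leftarrow Y$ with $X$ and $Y$ adjacent---but that case is already excluded by your contrapositive assumption that $X$ and $Y$ are non-adjacent in $H$, so nothing is lost.
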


This theorem motivates Raskutti and Uhler's idea that at least for faithful models, the models one wants to choose are Markov models with a minimal number of edges, that is, Markov models with as few coverings of misspecified unshielded colliders as possible. Raskutti and Uhler also extend this motivation to many unfaithful models.

\section{Repurposing Raskutti and Uhler's Recommendations}

We next discuss how to repurpose Raskutti and Uhler's idea to compare DAG algorithms more generally.

\cite{raskutti2018learning} indirectly recommend checking two assumptions for DAG models when giving justification for their Sparsest Permutation (SP) algorithm. They first define the sparsest Markov representation (SMR) assumption as follows. Let $|G|$ be the number of edges in $G$ and let $M(G)$ be the Markov equivalence class (MEC) of $G$. 

\begin{definition}
\label{SMR}
A pair $(G, P)$ satisfies the SMR assumption if $(G, P)$ satisfies the Markov property and $|G'| > |G|$ for every DAG $G'$ such that $(G', P)$ satisfies the Markov property and $G' \not \in M(G)$.
\end{definition}

\noindent \cite{lam2022greedy, lam2023thesis} call Definition \ref{SMR} \textit{unique frugality (u-frugality)} and a model that satisfies u-frugality \textit{u-frugal}. A key feature of Raskutti and Uhler's Sparsest Permutation (SP) algorithm is that it searches for u-frugal DAG models. In cases where such a model exists, they prove the SP algorithm is correct.

\begin{theorem}[Raskutti and Uhler, Theorem 2.3]
\label{raskutti_uhler_theorem}
Let $G^*$ be the true model for distribution $P$. SP algorithm outputs $G_{SP} \in M(G^*)$ given distribution $P$ if and only if the pair $(G^*, P)$ satisfies the SMR assumption. 
\end{theorem}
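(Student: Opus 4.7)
The plan is to exploit the characterization of SP as returning a DAG of minimum edge count among all DAGs Markov to $P$, and then to relate this minimum to the SMR condition. Recall that SP enumerates permutations $\pi$ of the variables, constructs the minimal I-MAP $G_\pi$ consistent with each $\pi$, and outputs a $G_\pi$ with the smallest edge count. As a preliminary step I would verify that $\min_\pi |G_\pi|$ coincides with $\min\{|G| : (G,P) \text{ satisfies MC}\}$: every $G_\pi$ is itself Markov to $P$, giving one direction, and for any Markov DAG $G$ with topological ordering $\pi$, the Markov condition forces each minimal-parent-set choice at $x_i$ to be no larger than the corresponding parent set in $G$, giving the other direction. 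I would also use the standard fact that all DAGs in $M(G^*)$ share a skeleton and hence have exactly $|G^*|$ edges.

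For the $(\Leftarrow)$ direction, SMR supplies that $(G^*, P)$ is Markov, so the minimum edge count over Markov DAGs is at most $|G^*|$. SMR further forces every Markov $G' \notin M(G^*)$ to satisfy $|G'| > |G^*|$, while every member of $M(G^*)$ attains exactly $|G^*|$ edges. Hence the minimum is $|G^*|$, attained exclusively on $M(G^*)$, and any output $G_{SP}$ of SP must lie in $M(G^*)$.

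For the $(\Rightarrow)$ direction, I would argue contrapositively. Every output of SP is Markov to $P$, and Markov-ness is constant on a Markov equivalence class, so the assumption that $G_{SP} \in M(G^*)$ immediately gives clause (i) of SMR. For clause (ii), note $|G_{SP}| = |G^*|$ is the minimum edge count among Markov DAGs. If some Markov $G' \notin M(G^*)$ satisfied $|G'| \leq |G^*|$, then $G'$ would either strictly beat or tie this minimum, and SP could legitimately return $G' \notin M(G^*)$, contradicting the hypothesis. Hence $|G'| > |G^*|$ for all such $G'$, which is SMR.

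The main obstacle I anticipate is the preliminary equality of the two minima, specifically showing that the minimal I-MAP for any topological ordering of a Markov DAG $G$ has no more edges than $G$ itself; this requires unpacking the minimal-I-MAP construction and applying the Markov property node-by-node along $\pi$. A secondary subtlety is the tie-breaking step in the converse direction: one must read the theorem's conclusion as ``SP returns a DAG in $M(G^*)$ regardless of how ties among minimum-edge minimal I-MAPs are broken,'' so that the existence of a distinct equal-cardinality competitor $G'$ really does contradict correctness.
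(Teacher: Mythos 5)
The paper does not prove this statement; it is imported verbatim from Raskutti and Uhler (their Theorem 2.3) and used as a black box, so there is no in-paper proof to compare against. On its own merits, your reconstruction is essentially the standard argument from their paper: reduce correctness of SP to the identity $\min_\pi |G_\pi| = \min\{|G| : (G,P)\ \text{Markov}\}$, note that every DAG in $M(G^*)$ has exactly $|G^*|$ edges, and then both directions of the equivalence fall out by comparing this minimum to $|G^*|$. The two subtleties you flag are the right ones, and your handling of them is correct: the inequality $|G_\pi| \le |G|$ for $\pi$ a topological order of a Markov DAG $G$ follows because $Pa_G(x_i)$ is itself a separating predecessor set for $x_i$ (via ordered local Markov plus weak union), so the minimal choice can only be smaller; and the converse direction does require reading the theorem as quantifying over all tie-breakings, since a Markov competitor $G' \notin M(G^*)$ with $|G'| = |G^*|$ is realized as $G_\pi$ for any of its topological orders and so is a legitimate SP output. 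The one step you treat as free but which actually carries a hypothesis is the claim that every $G_\pi$ is Markov to $P$: for the pairwise construction Raskutti and Uhler use ($j \to i$ iff $j \not\perp i \mid \mathrm{pred}_\pi(i)\setminus\{j\}$) this requires a regularity condition such as the intersection property (e.g., positivity of $P$) -- a caveat the present paper itself acknowledges in a footnote when relaxing the theorem to frugality -- so you should either state that assumption or switch to the minimal-under-inclusion parent-set construction, for which the I-map property holds for all semi-graphoids.
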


\noindent Importantly, as \cite{raskutti2018learning} and \cite{lam2022greedy} point out, frugal models do not necessarily belong to the same MEC; there may be multiple MECs $\{M_i\}$ (including $M(G^*)$) with minimal edge count that contains the distribution. \cite{raskutti2018learning} gives a simple example where two MECs explain a particular independence model equally well with minimal edge count. The unique frugality assumption can be relaxed to accommodate this fact; \cite{lam2022greedy} call the relaxation \textit{frugality}. The SP algorithm will return all frugal models, but is computationally infeasible for large numbers of variables. If unique frugality is not satisfied, these may not all belong to a single MEC.\footnote{This result requires a regularity condition, such as assuming the positive distribution.} We may therefore relax Theorem \ref{raskutti_uhler_theorem} as follows:

\begin{corollary}[Raskutti and Uhler Relaxation]
\label{frugal_model_class}
Let $G^*$ be the true model for the positive distribution P. Then SP outputs $G$ in some minimum edge count MEC $M \in \{M_i\}$ (containing $M(G^*)$) given a distribution $P$ if and only if $(G, P)$ satisfies the assumption of frugality.
\end{corollary}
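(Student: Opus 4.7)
The plan is to adapt Raskutti and Uhler's proof of Theorem~\ref{raskutti_uhler_theorem} by loosening the strict inequality in the SMR condition to the weak inequality that frugality allows. First I would recall that SP enumerates permutations $\pi$ of the variables; for each $\pi$ it constructs the DAG $G_\pi$ in which each vertex $x_i$'s parents are the smallest subset of its $\pi$-predecessors that d-separates it from the remaining predecessors under $P$, and SP then outputs any $G_\pi$ with minimum edge count. The key enabling fact I would invoke, already used in Raskutti and Uhler's development, is that under the positive-distribution regularity condition noted in the footnote, the collection $\{G_\pi\}$ coincides with the collection of DAGs that are Markov for $P$; positivity supplies the intersection property needed to read off minimal parent sets from conditional independence alone.

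For the forward direction I would argue that if SP outputs $G$, then $G = G_\pi$ for some $\pi$ attaining $\min_\pi |G_\pi|$, so by the above identification $G$ is Markov for $P$ and no Markov DAG has strictly fewer edges, i.e.\ $(G, P)$ is frugal and $G$ lies in some minimum-edge-count MEC $M$. For the converse, suppose $(G, P)$ is frugal. Then $G$ is Markov for $P$, so the identification gives $G = G_\pi$ for $\pi$ any topological ordering of $G$; minimality of $|G|$ among Markov DAGs forces $|G| = \min_\pi |G_\pi|$, so SP outputs either $G$ itself or another $G_{\pi'}$ with $|G_{\pi'}| = |G|$. Any such output is Markov with minimum edge count and hence lies in some $M \in \{M_i\}$.

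To justify the parenthetical ``containing $M(G^*)$'' I would use the CMC to put $(G^*, P)$ in the Markov class, whence $|G^*| \geq |G|$ by frugality of $G$; the positivity assumption then forces the reverse inequality (the true DAG is itself frugal under regularity), so $|G^*| = |G|$ and $M(G^*) \in \{M_i\}$. I expect the main obstacle to be precisely this last step. Without uniqueness, the argument splits cleanly into (a) SP returns a min-edge Markov DAG, which is essentially immediate from the Raskutti--Uhler lemma, and (b) $M(G^*)$ is itself among the min-edge MECs, which genuinely uses the regularity hypothesis and is the reason Raskutti and Uhler's original theorem needed the stronger SMR rather than plain frugality. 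The remainder of the proof is bookkeeping around this hypothesis and around the fact that SP is now allowed to return any representative of any frugal MEC rather than being pinned to $M(G^*)$.
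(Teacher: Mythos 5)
The paper itself gives no proof of Corollary~\ref{frugal_model_class}: it is stated as an immediate relaxation of Theorem~\ref{raskutti_uhler_theorem}, deferring to \cite{raskutti2018learning} and \cite{lam2022greedy}, so your attempt has to be judged against the argument in those sources. Your skeleton is the right one: SP builds, for each permutation $\pi$, the minimal I-map $G_\pi$; every $G_\pi$ is Markov for $P$; and $\min_\pi |G_\pi|$ equals the minimum edge count over all DAGs Markov for $P$, since for any Markov DAG $H$ with topological order $\pi_H$ one has $|G_{\pi_H}| \le |H|$. One overstatement: $\{G_\pi\}$ does not ``coincide with'' the set of Markov DAGs --- it is the set of permutation-minimal I-maps, in general a proper subset --- but because the two families share the same minimum edge count, and because under the intersection property a frugal Markov $G$ must in fact equal $G_\pi$ for its own topological order (the unique minimal parent set of $x_i$ is contained in $\mathrm{pa}_G(x_i)$, and equal edge counts then force equality), this is repairable and both directions of your equivalence go through.

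The genuine gap is your handling of the parenthetical ``containing $M(G^*)$.'' You claim positivity forces $|G^*| = |G|$ because ``the true DAG is itself frugal under regularity.'' That is false. Positivity is only the regularity condition supplying the intersection property, which makes the minimal parent sets unique so that $G_\pi$ is well defined and Markov; it does nothing to rule out an unfaithful distribution --- for instance one with path cancellation, which can perfectly well have a strictly positive density --- being Markov to a DAG with strictly fewer edges than $G^*$. The assertion that $M(G^*)$ lies among the minimum-edge-count MECs is exactly the content of the frugality \emph{assumption} applied to the true model (no Markov DAG has fewer edges than $G^*$); it is a hypothesis of the corollary, not a consequence of positivity. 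You correctly flag this as the step that ``genuinely uses'' an extra hypothesis, but you attribute it to the wrong one: replace ``positivity'' by ``frugality of $(G^*,P)$'' and the argument closes; as written, your step~(b) does not.
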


Corollary \ref{frugal_model_class} gives a condition (frugality) that can be checked in the data, or at least \textit{rejected} for empirical DAG models: check which models satisfy the Markov condition and of the models that do satisfy the Markov condition, reject those that do not have minimal edge count. We can reject models that are not Markov as not explaining the conditional dependence facts implied by the data. Models that are Markov but with more than the minimal number of edges we can reject as being non-frugal, not as simple as possible, and thus violating the condition of Corollary \ref{frugal_model_class}. Raskutti and Uhler project variable permutations to DAGs as recommended using a particular conditional independence test. In this way, we can sort through DAG models and pick out ones that cannot be rejected out of hand by this frugality criterion. This procedure mimics how the SP algorithm works, though SP considers all permutations of the variables and therefore cannot be applied to many more than a dozen variables. We may reject DAG models using the same reasoning. Still, we can check the Markov condition for much larger models, provided that we have algorithms that produce examples of such DAG models or Markov equivalence classes thereof.

\section{A Test of the Markov Condition}

Next, we describe our proposed check of the Markov condition for an empirical graphical model given a data set.

We can check Markov by making a list of graphical conditional independence claimed to hold made by an empirical graph $G$ using the d-separation criterion $S$ for the model class it represents and then check the correctness of this list. Checking to ensure that all independence facts implied by an empirical model are judged as independent by a statistical test of independence is not quite what is needed. Statistical tests of conditional independence use a significance cut-off point ($\alpha$) to reject independence, assuming that the null hypothesis of independence is true. If the null hypothesis is true, false rejections of the null hypothesis will occur at a rate of $\alpha$, which are unavoidable errors. False dependence judgments can also be rendered for similar reasons because of the distribution of p-values under the alternative dependence hypothesis. So, there will inevitably be some overlap between the judgments for the lists of implied independence and dependence judgments if the test yields the correct number of Type I errors even if the graph in question is $G^*$, provided enough independencies are checked.

We may check the Markov condition statistically by testing the uniformity of p-values under the null for independence facts implied by an empirical graphical model. Under the null hypothesis, the distribution of p-values with a continuous CDF is distributed as $U(0, 1)$, which implies the following theorem.

\begin{theorem}[Uniformity Check]
\label{causal_markov_check}
Let $G$ be a graphical model over nodes $V$. Let $P_{x, y, Z}$ be the population distribution of $\langle x, y \rangle$ conditional on $Z$, where $x$ and $y$ are nodes in $V$, $Z$ is a set of nodes in $V$, and assume that $P_{x, y, Z}$ in each case has a continuous CDF. Let $P$ be the list of p-values obtained by evaluating each of these conditional distributions on independent samples. Then $P$ is distributed as $U(0, 1)$.
\end{theorem}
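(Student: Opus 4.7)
The plan is to reduce this to the probability integral transform, since the statement is essentially a restatement of the standard fact that a continuous p-value is uniform under the null. First I would pin down the setup more carefully than the statement does: for each d-separation claim $dsep(x, y \mid Z)$ implied by $G$, the Markov condition makes the corresponding null hypothesis $I(x, y \mid Z)$ in $P$ true. A statistical test of conditional independence computes a test statistic $T_{x,y,Z}$ whose null distribution has CDF $F_{x,y,Z}$, and the p-value is $p_{x,y,Z} = 1 - F_{x,y,Z}(T_{x,y,Z})$ (or $F_{x,y,Z}(T_{x,y,Z})$, depending on the convention). The continuity assumption on $P_{x,y,Z}$ is what guarantees that $F_{x,y,Z}$ is continuous, so no ties in the test statistic complicate the argument.

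Second, I would invoke the probability integral transform: for any random variable $T$ with continuous CDF $F$, the variable $F(T)$ is distributed $U(0,1)$, and hence so is $1 - F(T)$. Applied pointwise, this gives that each $p_{x,y,Z}$ is marginally distributed as $U(0,1)$ under the null, which is where the Markov condition enters.

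Third, for the claim about ``the list $P$'' being distributed as $U(0,1)$, I would use the independence-of-samples hypothesis spelled out in the theorem: since each p-value is computed on an independent sample, the coordinates of the list are mutually independent, so the joint law is a product of $U(0,1)$ marginals. In particular the empirical distribution of the list converges to $U(0,1)$ as the number of tested d-separations grows, which is what the Markov Checker will exploit downstream.

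The statement is really just a packaging lemma, so there is no substantive obstacle; the only care needed is bookkeeping about what ``evaluating a conditional distribution'' means (i.e.\ running the CI test whose null is determined by that conditional distribution) and being explicit that the continuity hypothesis is exactly what licenses the probability integral transform step. If one wanted to weaken the continuity hypothesis, the conclusion would degrade to stochastic dominance of $U(0,1)$ over the p-value distribution, but under the stated assumptions the uniformity is exact.
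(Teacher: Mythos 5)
Your proposal is correct and follows essentially the same route as the paper's own proof, which is a one-line appeal to the probability integral transformation (citing Casella and Berger's Theorem 2.3) combined with the independence of the samples. Your write-up is simply a more explicit unpacking of that same argument — marginal uniformity of each p-value via $1-F(T)$ under a continuous null CDF, then a product of $U(0,1)$ marginals from the independent-samples hypothesis — so there is nothing to add or correct.
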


\begin{proof}
This theorem follows immediately; any collection of independently obtained p-values for variables with continuous CDFs constitutes independent draws from $U(0, 1)$ by \cite{casella2021statistical} Theorem 2.3 (``probability integral transformation'') and will therefore be distributed as $U(0, 1)$.
\end{proof}

\noindent By Theorem \ref{causal_markov_check}, if one correctly estimates a p-value for each implied independence or conditional independence in a graphical model, and if these p-values are obtained from independent samples, are sufficiently many to do a test and are found not to be distributed as $U(0, 1)$, one can reject that MC holds for this model---that is, that the model contains the distribution of the data. The only question is how many p-values one needs to do this test in practice to have sufficient Power to reject alternative models; we can check this in simulation. We will comment on dependence among p-values shortly. Note that such a Uniformity check has the advantage that non-uniformity can be detected in principle throughout the distribution of p-values. Our algorithm for checking Markov is given in Algorithm \ref{alg:markov-check}.

\begin{algorithm}[ht]
\caption{MarkovCheck}
\label{alg:markov-check}
\KwIn{$G$, $F$, $D$, $I$, $U(p\_list)$, $\alpha$}
\KwOut{A judgment of whether $G$ passes a Markov check relative to $D$}
\KwData{$G$ is a graph, $F$ is a list of separation facts implied by $G$, $D$ is a dataset over the variables in $G$, $I$ is a test of independence, $U$ is a test that returns the p-value of a judgment of whether $p\_list$ is drawn from $U(0, 1)$, and $\alpha$ is a significance cutoff for $U$}

Initialize an empty list of p-values, $p\_list$\;
\For{each fact $f$ in $F$}{
    Use the independence test $I$ to compute a p-value for $f$\;
    Append the p-value to the list $p\_list$\;
}
Compute the p-value of $p\_list$ using $U(p\_list)$\;
\If{$U(P) > \alpha$}{
    \Return True\;
}
\Else{
    \Return False\;
}
\end{algorithm}

In this implementation of the Markov Checker, the algorithm assumes that the p-values are estimated for \textit{x} independent of \textit{y} conditional on \textit{Z}. As noted above, it is not the case that conditional on \textit{Z} independence of each \textit{x} in \textit{X} and \textit{y} in \textit{Y} entails in general that conditional on \textit{Z} that \textit{X} is independent of \textit{Y}. Thus, it is possible for some families of distribution that a DAG that does not satisfy the Markov condition for the population distribution would not be rejected by the Markov Checker. However, this will not occur in the simulations that we have performed in this paper, since in the Gaussian distribution that we use in the simulations, dependence of X and Y conditional on Z does entail dependence of some x and y conditional on Z.

Strictly speaking, if the Markov Checker rejects a hypothesis, it is not specifically rejecting that the DAG is Markov to the population distribution but instead rejecting that the DAG is Markov to the population distribution \textit{and} that the p-values obtained under the null are independent. If we were to look at the histogram of p-values for the independence relations implied by $G$ under null, the dependence of the test would appear as a spike in the leftmost bar; the rest of the bars should be uniformly distributed under either independence or dependence if the p-values are independent. If the p-values are not uniform for the remainder of the p-values, one knows that the p-values are themselves dependent.

In any case, there are things that we can do in principle to ensure that our various tests of independence are at least reasonably independent of each other so that we can apply the tests sensibly. We could, for instance, perform tests on independent samples, by data splitting, say, though this is not always feasible with small samples. We could also ensure the independence of samples by subsetting the independence tests we do so that the variables they test do not overlap, though, this may result in a set of independence tests that is no longer representative of $G$, if the model requires large conditioning sets. If small conditioning sets can be found, this may be advantageous; one may for this reason wish to avoid checking all conditional independence tests implied by a model and look only at Local Markov or Ordered Local Markov.

We may also, in principle, encourage the independence of p-values by minimizing data overlap for conditional independence tests through subsampling for each conditional independence test performed. To minimize the overlap of variables by subsampling, we added a parameter to our software to pick a random subsample without replacement for each new conditional independence test of a certain percentage of the available samples. \cite{meinshausen2010stability} and \cite{stekhoven2012causal} also recommend this strategy and suggest that this proportion be set by default to 0.5; we take this as our default but allow the user to choose the size of the sub-sample. 

In practice, adjusting for these concerns is unnecessary, as we demonstrate in the next section.

\section{Data Overlap}\label{sec:overlap}

One worry about collecting p-values from multiple conditional independence tests is that tests with variables in common may lead to dependent p-values, thereby violating uniformity under the null. A possible solution to this problem, as noted, is to limit the overlap of data from one test to the next. In simulations, we found that this is unnecessary; see Figure \ref{fig:data-overlap}. The figure shows the proportion of Anderson-Darling p-values less than a given value plotted against the estimated Anderson-Darling p-values; if these p-values are uniformly distributed, one expects a diagonal line from (0, 0) to (1, 1) in the plot. We find that all three conditions approximate this line well. So, despite the theoretical worry, we do not find it helpful in practice to insist on data subsetting as a way of generating more uniformly distributed sets of p-values for conditional independence tests. 

In particular, the figure shows results for a model with 25 nodes and an average degree of 5 (to match the types of models analyzed in our simulation section below). These are for linear Gaussian models, using the DaO simulation method \citep{andrews2024better}. The figure on the left uses a sample size $N = 100$; the figure on the right uses $N = 1000$. The number of repetitions is 100. Three cases are shown. In the ``sub'' case, every test uses a different $N/2$-size subsample of the data. In the ``orig'' case, every test uses the full original data set. In the ``new'' case, every test draws a new full dataset. In all cases, the Kolmogorov-Smirnoff tests did not find any significant differences; all pairwise comparisons have p-values $> 0.05$; see Table \ref{overlap_p_values}. It is interesting that even for $N = 100$ one observed distributions of p-values very close to $U(0, 1)$ in all three cases, showing that uniformity under the null is not an issue at that sample size (although detecting dependencies may be).

\begin{table}[h!]
\centering
\begin{tabular}{l|c|c}
& N = 100 & N = 1000 \\
\hline
sub-orig & 0.4695 & 0.5830 \\
sub-new  & 0.4695 & 0.3682 \\
orig-new & 0.7021 & 0.5830
\end{tabular}
\caption{P-values for different comparisons at N=100 and N=1000}
\label{overlap_p_values}
\end{table}

However, the power of the Markov Checker test depends upon several factors: the independence, number, power and correctness of the statistical tests producing the p-values input into the uniformity test. The power of the statistical tests producing the p-values in turn depends on the sample size of the data. For a given dependence between \textit{x} and \textit{y} conditional on \textit{Z}, the p-values grow smaller as the sample size grows larger. This leads to increased non-uniformity in the p-values as the sample size grows larger, and the increased non-uniformity is easier to detect by the uniformity test as the sample size grows larger. 

In terms of power, a procedure $P_{n,m}$ that partitions the data in such a way that each single conditional independence is tested by \textit{n} independent samples of sample size $m$ has both an advantage and a disadvantage over a procedure $P_{1,n \times m}$ in which each single conditional independence is tested once by a sample that of size $n \times m$. On the one hand, the power of the uniformity test is increased for $P_{n,m}$, because more p-values are provided to the uniformity test. On the other hand, because each p-value is generated by a smaller sample size in $P_{n,m}$, the non-uniformity is more difficult to detect. Exactly how the trade-off between $P_{n,m}$ and $P_{1,n \times m}$ works for a given graph and sample is not known.

\begin{figure*}
    \centering
    \begin{subfigure}[t]{0.45\textwidth}
    \centering
    \includegraphics[width=\textwidth]{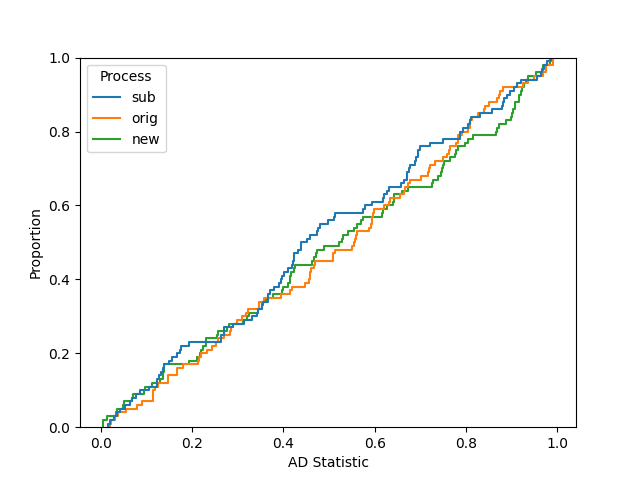}
    \caption{100 samples}
    \end{subfigure}
    \hfill
    \begin{subfigure}[t]{0.45\textwidth}
    \centering
    \includegraphics[width=\textwidth]{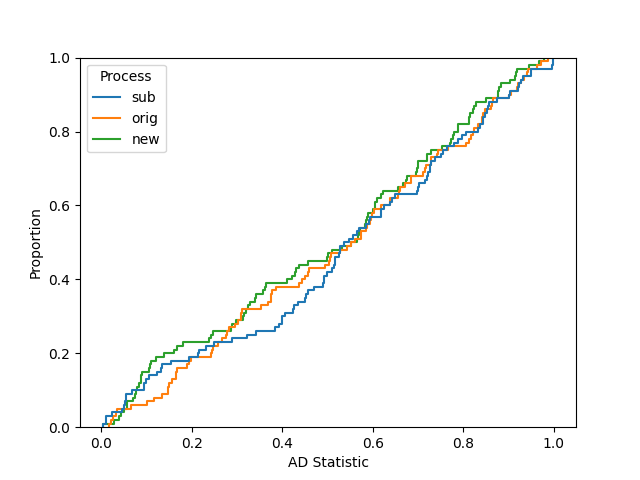}
    \caption{1000 samples}
    \end{subfigure}
    \caption{A comparison of p-value uniformity for conditional independence testing, for (i) the dataset randomly subsetted without replacement to sample size $n / 2$, (ii) the original dataset, without subsetting, and (iii) a newly sampled dataset.}
    \label{fig:data-overlap}
\end{figure*}

\section{The CAFS Procedure}

Given a data set $D$, we can test various candidate graphs under Markov conditions and compare uniform p-values and the number of edges in these graphs. This lets us define a metaloop over a set of candidate DAGs or CPDAGs. Candidate DAGs or CPDAGs can come from background beliefs, algorithms with fixed search parameters, or some combination of the two. CAFS will then search for graphs that cannot be rejected by either a Markov check or the frugality criterion. We call this meta-loop ``Cross-Algorithm Frugality Search'' (CAFS) and give the algorithm in Algorithm \ref{alg:cafs}. Importantly, it does not matter which principles or background assumptions that the various algorithms use to search the space of DAG models; in our experimental section below, we consider algorithms applied to simulation data that appeal to very different types of theory and compare them directly to one another using the notion of frugality. Importantly, CAFS does not require any reference to ground truth, since Causal Markov is a relationship between $P$ and $G$, and edge counting edges depend only on $G$. We hypothesize that CAFS helps to pick out more accurate models from a list of graph options, and perform a simulation study below to confirm this.

It is not necessary to use the number of edges as the simplicity criterion for such a search; other simplicity criteria, such as the parameter count, could be substituted. Further work could explore the use of various simplicity criteria for this purpose.

\begin{algorithm}[ht]
\caption{Cross-Algorithm Frugality Search (CAFS)}
\label{alg:cafs}
\KwIn{$A$, $D$, $I$, $U$, $\alpha$}
\KwOut{The algorithm in $A$ with the fewest number of edges that passes the Markov check}
\KwData{$A$ is a list of candidate DAGs or CPDAGs, $D$ is a dataset, $I$ is an independence test, $U$ is a uniformity check, $\alpha$ is a significance cutoff for the Markov check}

Initialize $best\_alg \gets \texttt{None}$\;
Initialize $min\_edges \gets \infty$\;

\For{each candidate graph $G$ in $A$}{
    Infer a list of conditional independence facts $F$ implied by $G$\;
    Record the number of edges in $G$ as $e$\;
    
    Run \texttt{MarkovCheck($G$, $F$, $D$, $I$, $U$, $\alpha$)}\;
    
    \If{\texttt{MarkovCheck} returns True \textbf{and} $e < min\_edges$}{
        $best\_alg \gets a$\;
        $min\_edges \gets e$\;
    }
}

\Return $best\_alg$\;
\end{algorithm}

\section{Implementation}

We provide a software tool, the Markov Checker, for checking Markov as part of the Tetrad software suite for those who do not wish to handle the details themselves, though the test can be readily implemented. In the tool, one first selects an independence test for comparison from a list of available tests. d-separation is used to infer graphical predictions of independence for an empirical search graph.\footnote{Both m-separation and d-separation use the same algorithm; we call this m-separation in our software to highlight that our Markov check can also be applied to latent variables models, a topic we hope to return to in future work.} One then selects a proportion of the available sample size for each independence performed; the default is 0.5, as suggested above. By default, we test Ordered Local Markov, though other types of conditioning sets are available. For Ordered Local Markov, the number of tests to check is quadratic in the number of variables. For this reason, our software easily scales to more than 200 variables using a linear Gaussian conditional independence test on a Mac laptop with an M2 processor, in a single thread, and the procedure is highly parallelizable.\footnote{We use an optimized check of d-separation using reachability \citep{geiger1990} where, for example, ancestors in $G$ are precomputed, and parents in $G$ are hashed. We also restrict ourselves to fast independence tests and allow the procedure to be parallelized across different algorithm runs, a parallelization that is also highly parallel.}

Once selections are made, all implied m-separations are listed, along with their p-values and judgments of independence according to the chosen statistical conditional independence test. A histogram of these p-values is shown so that the user can visually judge their uniformity, although a formal uniformity test is more reliable and informative. For our Markov check, we give a p-value for an Anderson-Darling test to determine whether the claim that the test p-values are drawn from $U(0, 1)$ can be rejected \citep{anderson1952asymptotic}. For the p-value of the Anderson-Darling test, we use the formulation given in \cite{marsaglia2004evaluating}. We also give a Kolmogorov-Smirnoff statistic for the same proposition. Power has been studied for Anderson-Darling and Kolmogorov-Smirnoff Gaussianity tests by \cite{razali2011power}; we have not found a similar published study of Power for such tests of U(0, 1), although we can comment on the Power of our Markov tests based on our simulation result. We also give the edge count of the tested model so that the user may select models with low edge counts.

We provide a general grid search facility in our software to help users identify minimal edge models satisfying Markov, though again such a grid search facility is straightforward (perhaps tedious) to implement for particular cases. For our simulation comparison below, we simply list the algorithm outputs together with a variety of statistics, some of which require knowledge of the ground truth and some of which do not, and allow these tables to be sorted, first by sorting algorithm variants that pass Markov to the top and then secondarily sorting by simplicity. The measure of simplicity that we provide is that that Raskutti and Uhler recommend: that is, the number of edges in the estimated CPDAG, $|G|$. As a surrogate for the Markov checking p-value, to make the plots easier to interpret, we calculate the Kullback-Leibler distance from $U(0, 1)$ of a histogram of the conditional independence p-values for a given algorithm variant. In our software tool, this is the histogram of p-values given in the interface.

\section{Expanding the Toolkit for Domain Experts}

In a strict sense, BIC \citep{schwarz1978estimating, haughton1988choice}, or, for that matter, any penalized likelihood statistic, does not allow one to reject false models. Unless the correct model is among the models being compared, one can never know how far one is in practice from optimizing the penalized likelihood statistic. BIC is often used without ground truth by optimizing this score to the extent possible, even though the problem is recognized (e.g. \cite{teyssier2005ordering}). A hope is that exact searches can get around this stricture, though exact searches do not guarantee that the correct model is among the compared models at finite sample sizes; also, they may be searching the wrong class of models.

For linear Gaussian models, the p-value of a chi-squared test is often appealed to without ground truth, though in practice, if even one or two edges are out of place in one's model, the p-value of the model drops numerically to zero, a known drawback (\cite{bentler1980significance}). A response to this has been the development of various \textit{model fit indices}, particularly in the social science literature \citep{bollen1993testing}, such as the Comparative Fit Index (CFI, \cite{bentler1990comparative}) or the Normed Fit Index (NFI, \cite{bentler1980significance}), the so-called ``Bentler-Bonett score.'' These are also methods of algorithm comparison that are available to the empirical researcher without reference to the ground truth, though they suffer from the same problem as BIC, in that they do not give ``thumbs-up'' or ``thumbs-down'' judgements of particular empirical models. However, there are recommended cut-off points for accepting a model on either of these criterion (\cite{hu1999cutoff}). It will be interesting to see what these cutoffs compare to the recommendations of the Markov checker. We will include these two model-fit indices in our simulation outputs.

In sum, a check of the Markov condition with an eye toward minimal edge counts can expand the toolkit for domain researchers by allowing these large and dense models to be easily checked without ground truth, in a way that gives a ``thumbs-up'' or ``thumbs-down'' judgment that is motivated theoretically. 

\section{Limitations}

Notably, the Markov checker is looking for minimal Markov models, which amounts, for the causally sufficient case, to looking for CPDAGs that explain the data with the fewest number of edges, though other simplicity principles such as minimizing the number of parameters of the model could also be used. What it \textit{cannot} do is help pick out a particular model from a Markov equivalence class (MEC). Several algorithms are available that can do this under assumptions that are stronger than linear Gaussian. For example, if the model can be placed into the class of linear \textit{non-Gaussian} models, then it is possible to orient all edges in the model \citep{shimizu2011directlingam}, though the Markov checker would not be able to say whether these additional orientations are correct; some other test would be needed for that, such as checking the independence of residuals. Similarly, one can find more orientations than the MEC allows if the model can be correctly placed into the class of nonlinear models with additive noise or heteroskedastic models, and the Markov checker would not be able to say whether the additional orientations afforded by algorithms taking advantage of these stronger assumptions are correct, even if the data one has at hand clearly falls into one of these categories. For such data, it may still be reasonable to rule out numerous models, but getting such fine details of orientation correct is beyond its capabilities.

The procedure outlined above is that we are using a linear Gaussian conditional independence test for our Markov checker. This can be problematic if the data are not linear Gaussian. If the data are multinomial or mixed discrete/Gaussian, conditional independence tests are available for these cases that are fairly fast (cf. Chi Square or Degenerate Gaussian, \citep{andrews2019learning}). For more general distributions (and many empirical data sets do not fit in these distribution families), general tests would be preferable, in principle, although these are usually at least $O(N^3)$, where $N$ is the sample size (see, e.g. \cite{zhang2012kernelbasedconditionalindependencetest}). The question then becomes whether subsampling with or without replacement allows for sufficiently accurate assessments of conditional independence, or whether it is possible that faster tests can still give useful information despite being too weak to give good judgments of conditional dependence. This requires further study.

In addition, it is possible that the Ordered Local Markov check does not generate enough p-values by itself to perform a reliable test of whether the p-values are drawn from $U(0, 1)$. A possible remedy for this is to draw multiple random subsamples of the data and do the conditional independence tests implied by Ordered Local Markov for each subsample; as Figure \ref{fig:data-overlap} shows, such subsampling still generally yields p-values in $U(0, 1)$ in aggregate, so it is a sensible practice. In our software, we have included a feature that allows the user to do this; new independent p-values can be added.\footnote{In the Tetrad interface, it the Markov Checker tool, one clicks the button labeled ``Add Sample.'' This functionality is available as well in the Tetrad library, so it is available in Python as well.}

Another limitation of the method as we have presented it is that we are only looking for DAG or CPDAG models. This is not a limitation of the Markov checker \textit{per se}; one can extend the scope of the Markov checker to check Markov for any model class that admits a separation criterion. Partial ancestral graph models, for example, can use \textit{ m-separation} as a separation criterion to generate conditional independence facts implied by an acyclic model that allows for latent common causes and selection bias. However, this requires further work.

\section{A Simulation Comparison}

We determine how well a search for frugal models, iterating over multiple algorithms and tuning parameter choices, without referring to ground truth, picks out models judged as accurate by subsequent comparison to $M(G^*)$. We focus on the scenario where one is judging whether a specific model should be judged as passing Markov. One could also raise the question of which algorithms may be \textit{expected} to pass Markov under which settings of their hyperparameters; we will leave discussion of this question to another time. One feature of the Markov checker is that it is entirely possible that \textit{all} compared models fail a Markov check, so that there are no recommended models. We take this to be a feature and not a bug. As noted, a number of statistics in common use do not have this property but instead return a model maximizing or minimizing a score. So, if the set of models compared does not contain a suitable model, an inadequate model may be returned. The Markov checker aims to find adequate models, if not good or excellent ones, in a theoretically motivated way.

We consider models from the following algorithms for the linear Gaussian case that generate CPDAGs: PC \citep{spirtes2000causation}, CPC \citep{ramsey2006adjacency}, FGES \citep{ramsey2017million}, GRaSP \citep{lam2022greedy}, BOSS \citep{andrews2023fast}, DAGMA \citep{bello2022dagma}, BiDAG \citep{suter2023bayesian}, MMHC \citep{tsamardinos2006max}, and PCHC \citep{tsagris2021new}. We consider a linear Gaussian data only for these comparisons, and so leave out any algorithm that requires stronger (or different) assumptions. We simulate both training and testing data, learn the graphs on the training data, and test the Markov condition on the testing data. As for the models in Figure \ref{overlap_p_values}, these are for linear Gaussian models, using the DaO simulation method \citep{andrews2024better}. Random graphs are selected with a given number of nodes and a given average degree of nodes by adding uniformly selected random edges to the graph until the required number of edges has been achieved and directing the edges using a fixed order of the nodes. The number of nodes ranges over 5, 10, 15, 20, 25, and 30; the average degrees average over 1, 2, 3, 4, and 5. The size of the samples for the training and testing sets is 500, although as suggested in Figure \ref{overlap_p_values}, for each independence test that the Markov checker uses to generate a p-value, a random subsample of $N / 2$ is selected without replacement from the training set, in order to generate additional p-values where needed.

Notably, these algorithms use a variety of principles to infer CPDAGs. PC and CPC appeal to a conditional independence Oracle. In practice, they are not guaranteed to generate CPDAGs, so we need to check whether the resulting models are legal CPDAGs to use Ordered Local Markov and apply Corollary \ref{frugal_model_class}. We used a linear Gaussian conditional correlation test with $\alpha$ chosen from 0.001, 0.01, 0.05, 0.1, and 0.2. We include several score-based algorithms, FGES, GRaSP, and BOSS. These algorithms appeal to a linear Gaussian BIC score $BIC = 2 L - \lambda k (ln N)$ with penalty discount $\lambda$ chosen from 10.0, 5.0, 4.0, 3.5, 3.0, 2.5, 2.0, 1.75, 1.5, 1.25, and 1.0. GRaSP and BOSS follow the advice in \cite{raskutti2018learning} to search for models that seek suitable permutations of variables. We vary the range of $\lambda$ to demonstrate how our method recommends tuning parameter choices. DAGMA is a continuous optimization algorithm; we vary the parameter $\lambda_{1}$ as 0.01, 0.02, and 0.03, including the setting recommended by the authors $\lambda_{1}=0.02$. BiDAG uses Markov chain Monte Carlo methods to pick high-scoring DAGs; we use the authors' recommended settings. MMHC and PCHC are variant hybrid methods consisting of constraint-based steps to find adjacencies (different for each) and a score-based step to infer orientations. 

We apply CAFS using the Anderson-Darling test to determine whether the set of p values is drawn from a distribution $U(0, 1)$. We calculate a variety of statistics for the resulting models, some of which do not depend on knowledge of the ground truth, others do. In our full results, we consider problems for the numbers of variables in 5, 10, 15, 20, 25, and 30 and the average degrees of graph of graphs of 5, 10, 15, 20, 25, and 30, although for our present purposes we report only the case of 25 nodes with an average degree of 5. As argued above, it is possible to add additional p-values to check uniformity by performing tests with randomly subsampled data without replacement with sample size N / 2 (see Figure \ref{fig:data-overlap}). We have used this idea to make sure that each plot notes models passing our Markov check using a list of at least 200 p-values.

The full set of results and plots will be included on our GitHub site, together with Python scripts (and instructions) to run our simulations and plot arbitrary statistics. The Python scripts access code in Python, R, and Java.\footnote{Our GitHub site is at \url{https://github.com/cmu-phil/markov-checker}.}.

One statistic we report that does not require any knowledge of the ground truth, plotted in Figure \ref{fig:result_20_5_num_edges}, is the number of edges in the estimated CPDAG (``$|G|$'', \ref{fig:result_20_5_num_edges}), the CAFS result. We plot this against the p-value of the Anderson-Darling uniformity statistic. We plot with stars each point that passes a Markov check using Anderson-Darling with a significance level of 0.05, and with smaller round dots for other points. We plot the position of the true model in each plot in red and highlight the CAFS-selected model using an empty red star. What one expects from this graph, if the CAFS procedure is correct, is that a good model should be among those that are leftmost in the graph (low values for $kldiv$) and minimal among those (low values for $|G|$).  

One may wish to consider other starred points in the plot as alternative models passing a Markov check. To highlight one such point, we highlight with an empty circle the point picked out by a minimum KL divergence for the p-values obtained from the Markov checker against the p-values that would be expected under the assumption that they are obtained from a $U(0, 1)$ distribution. We do this as follows. We first calculate a histogram of the Markov checker p values with 20 bins. We then check this against a list of p-values equal to $1/20$ per bin. Other points passing Markov as well could be closely related to various simplicity conditions, but we will limiting ourselves to these two highlightings.

Another statistic that we plot that does not require knowledge of the ground truth is plotted in Figure \ref{fig:result_20_5_bic} BIC (Bayes Information Criterion, \cite{schwarz1978estimating}, \ref{fig:result_20_5_bic}), which we also plot against $p\_ad$. In our case, we calculate the BIC as $2L - k (ln N)$, where $L$ is the linear Gaussian likelihood of the model, $k$ the number of parameters and $N$ the sample size. Thus, we expect especially good models to have a high BIC score, calculated in this way, so we expect good models to be situated in the upper left-hand portion of the scatter plot.

As a way of seeing how these results compare to the truth, we plot two statistics also against $p_ad$ that require knowing the ground truth CPDAG for their calculation: (c) the F1 score for adjacency accuracy (calculated using adjacency precision and recall, where precision is defined as TP / (TP + FP) and recall is defined as TP / (TP + FN); see Figure \ref{fig:result_20_5_f1}), and (d) the Structural Hamming Distance (SHD) of the output CPDAG compared to the true CPDAG (counting as Hamming distance one error for each edge reversal and otherwise in one error each for any adjacency or arrowhead discrepancies, \cite{heckerman1995learning}; see Figure \ref{fig:result_20_5_shd}).

In addition, we include two model scores that do not require knowledge of ground truth for their calculation, (e) the CFI score of the estimated DAG (Comparative Fit Index, \cite{bentler1990comparative}; see Figure \ref{fig:result_20_5_cfi}) and (f) the NFI score of the estimated DAG (Normal Fit Index, \cite{bentler1980significance}; see Figure \ref{fig:result_20_5_nfi}).\footnote{We use the lavaan package in R (\cite{rosseel2012lavaan}) to calculate these model scores.}

We plot each of these additional statistics, respectively, against $p\_ad$, with plot icons as for $|G|$.

In our complete results, reported on our GitHub site, we plot a number of additional statistics against $p\_ad$.

\begin{figure*}
    \centering
    \begin{subfigure}[t]{0.49\textwidth}
    \centering
    \includegraphics[width=0.8\linewidth]{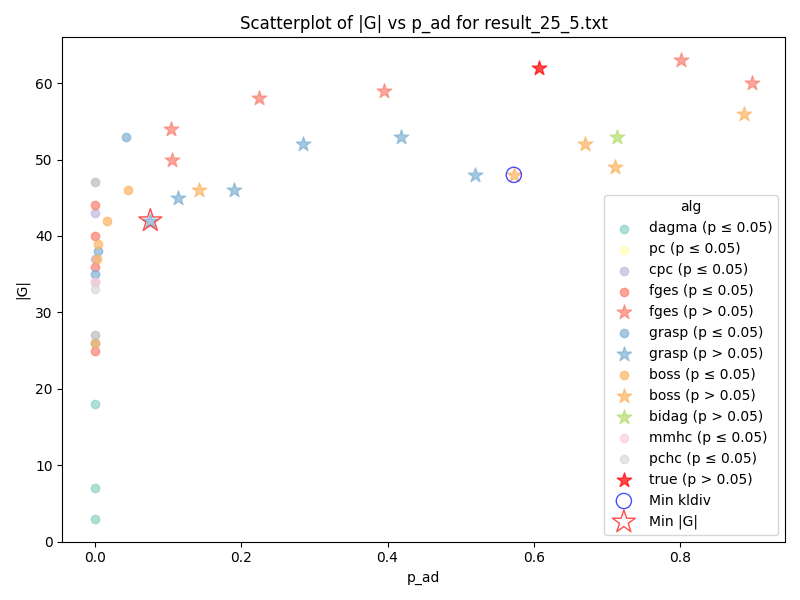}
    \caption{$|G|$: lower is better}
    \label{fig:result_20_5_num_edges}
    \end{subfigure}
    \hfill
    \begin{subfigure}[t]{0.49\textwidth}
    \centering
    \includegraphics[width=0.8\linewidth]{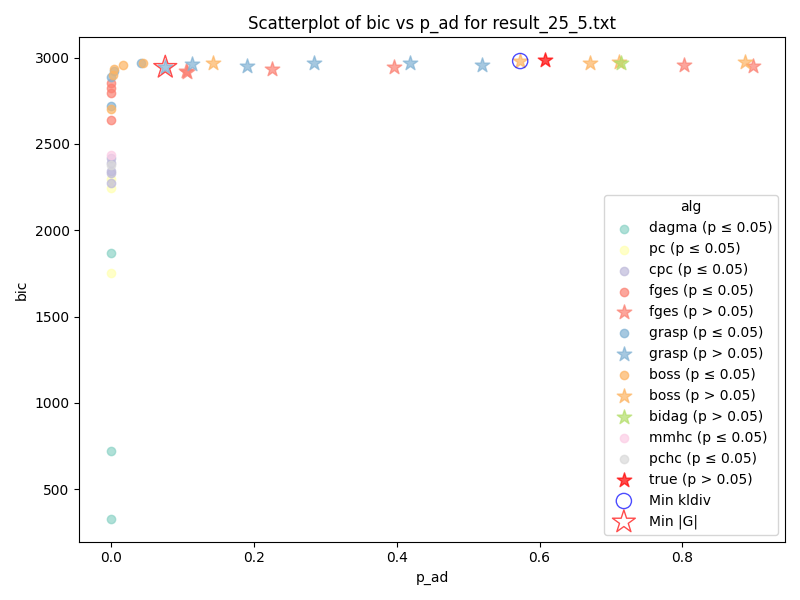}
    \caption{BIC: higher is better}
    \label{fig:result_20_5_bic}
    \end{subfigure}
    \begin{subfigure}[t]{0.49\textwidth}
    \centering
    \includegraphics[width=0.8\linewidth]{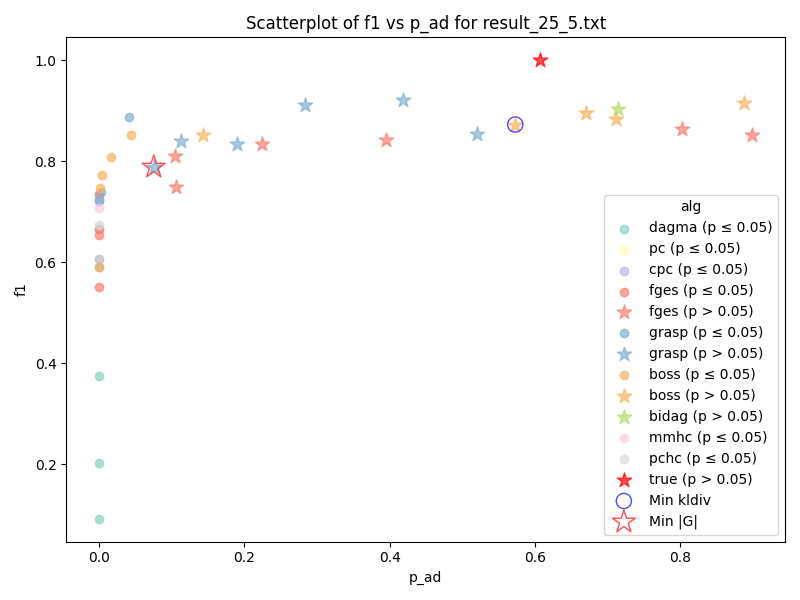}
    \caption{F1: higher is better}
    \label{fig:result_20_5_f1}
    \end{subfigure}
    \hfill
    \begin{subfigure}[t]{0.49\textwidth}
    \centering
    \includegraphics[width=0.8\linewidth]{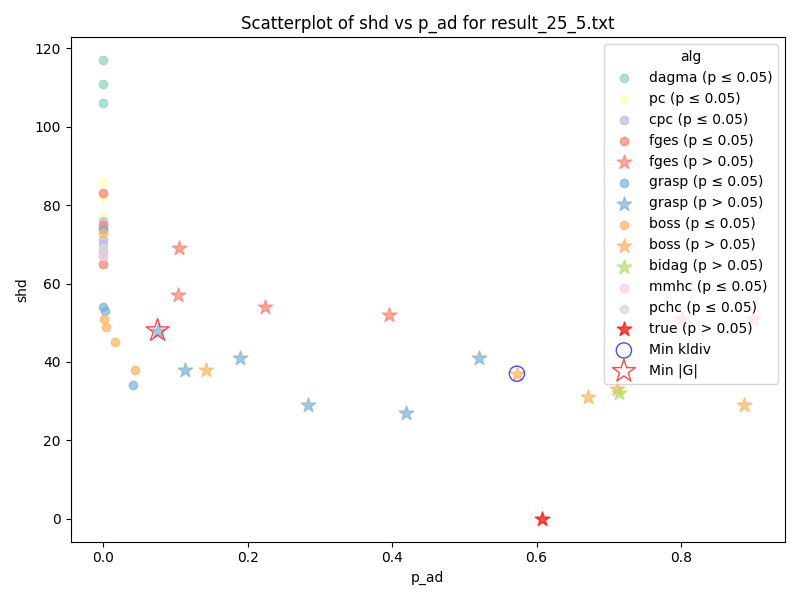}
    \caption{SHD: lower is better}
    \label{fig:result_20_5_shd}
    \end{subfigure}
    \begin{subfigure}[t]{0.49\textwidth}
    \centering
    \includegraphics[width=0.8\linewidth]{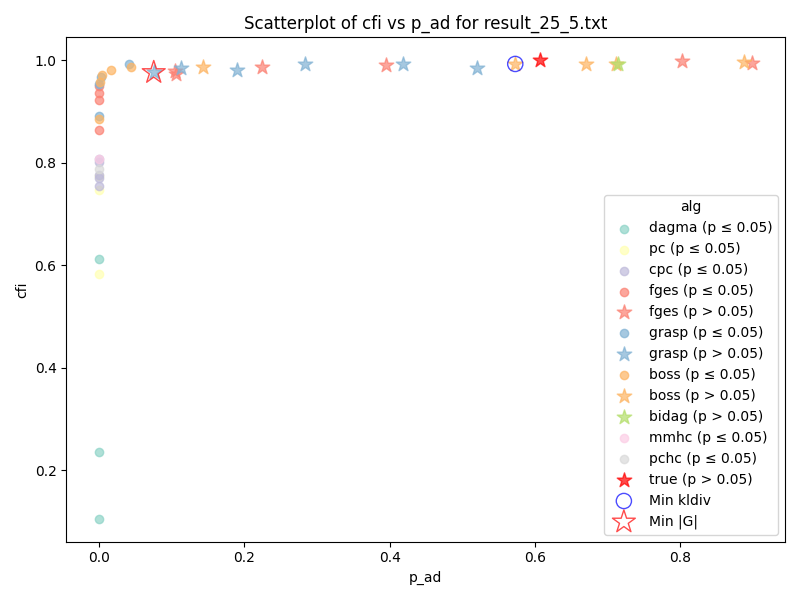}
    \caption{CFI: higher is better}
    \label{fig:result_20_5_cfi}
    \end{subfigure}
    \hfill
    \begin{subfigure}[t]{0.49\textwidth}
    \centering
    \includegraphics[width=0.8\linewidth]{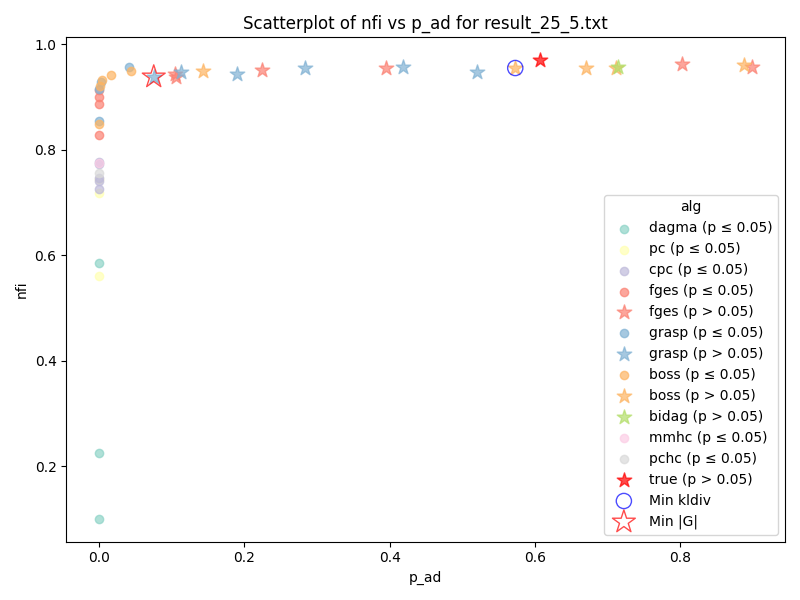}
    \caption{NFI: higher is better}
    \label{fig:result_20_5_nfi}
    \end{subfigure}
    \caption{Scatter plots of six statistics for the estimated CPDAGs against the Anderson-Darling p-value ($p_ad$) of a 20-bin histogram of the Markov checker p-values for each algorithm variant from expected p-values from a 20-bin $U(0, 1)$ histogram.}
    \label{fig:result_20_5}
\end{figure*}

\section{An Empirical Example}

As noted above, our presentation is subject to a number of limitations that make it somewhat difficult to find off-the-shelf real data sets easily to analyze. We choose a data set that has been analyzed many times, the US Crime data set.\footnote{This data may be found at \url{https://github.com/cmu-phil/example-causal-datasets/tree/main/real/uscrime}, together with a description of the variables and links to the source.} This has 14 variables with $N = 47$ and thus gives us the opportunity to consider the additional problem of small sample size. Since the sample size is so small, we will use an $\alpha$ threshold for our Anderson-Darling uniformity test of $0.2$. Since we do not have ground truth, we are not able to give plots for the F1 or SHD statistics, above, but we can give plots for $|G|$, BIC, CFI and NFI. Because they often do not produce legal CPDAGs for these data (in violation of our choice to use Ordered Local Markov to generate conditional independence facts implied by a model), we remove PC and CPC from our list of algorithms. Otherwise, we follow the same procedure as above.

Figure \ref{fig:us_crime_num_edges} shows the number of edges in the estimated model plotted against $p\_ad$ for these data; Figure \ref{fig:us_crime_bic} shows the BIC plotted against $p\_ad$; Figure \ref{fig:us_crime_cfi} shows CFI plotted against $p\_ad$; Figure \ref{fig:us_crime_nfi} shows NFI plotted against $p\_ad$.

\begin{figure*}
    \centering
    \begin{subfigure}[t]{0.49\textwidth}
    \centering
    \includegraphics[width=0.8\linewidth]{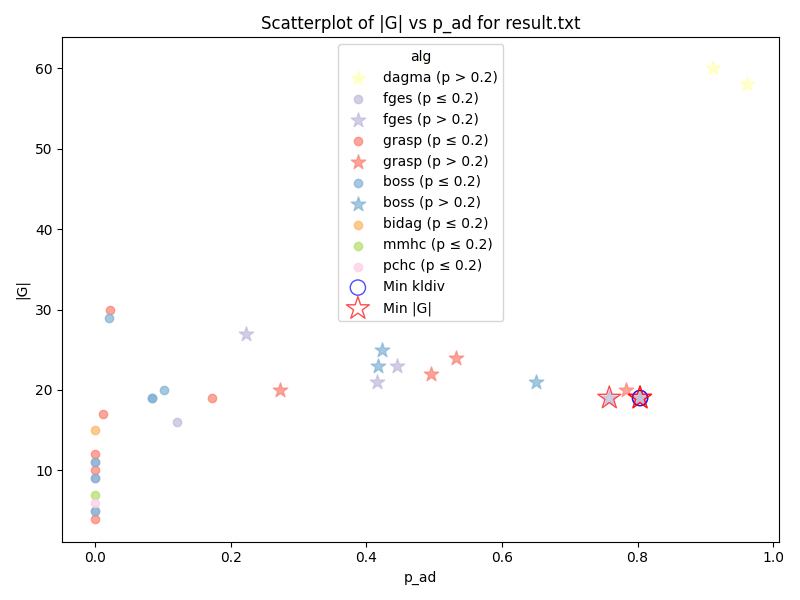}
    \caption{$|G|$: lower is better}
    \label{fig:us_crime_num_edges}
    \end{subfigure}
    \hfill
    \begin{subfigure}[t]{0.49\textwidth}
    \centering
    \includegraphics[width=0.8\linewidth]{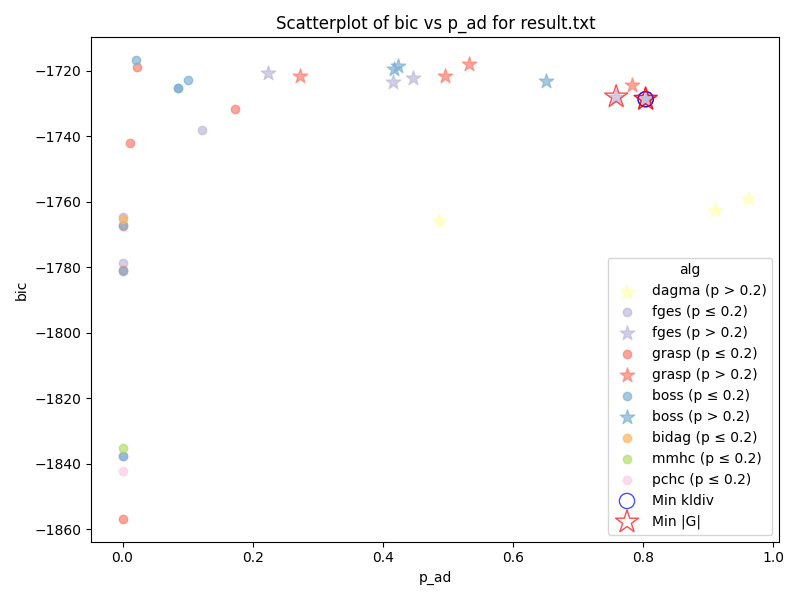}
    \caption{BIC: higher is better}
    \label{fig:us_crime_bic}
    \end{subfigure}
    \begin{subfigure}[t]{0.49\textwidth}
    \centering
    \includegraphics[width=0.8\linewidth]{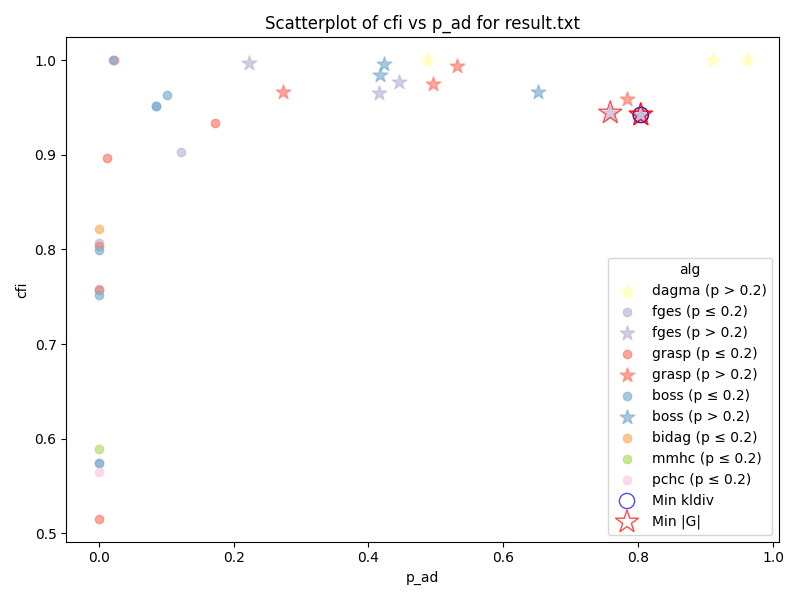}
    \caption{CFI: higher is better}
    \label{fig:us_crime_cfi}
    \end{subfigure}
    \hfill
    \begin{subfigure}[t]{0.49\textwidth}
    \centering
    \includegraphics[width=0.8\linewidth]{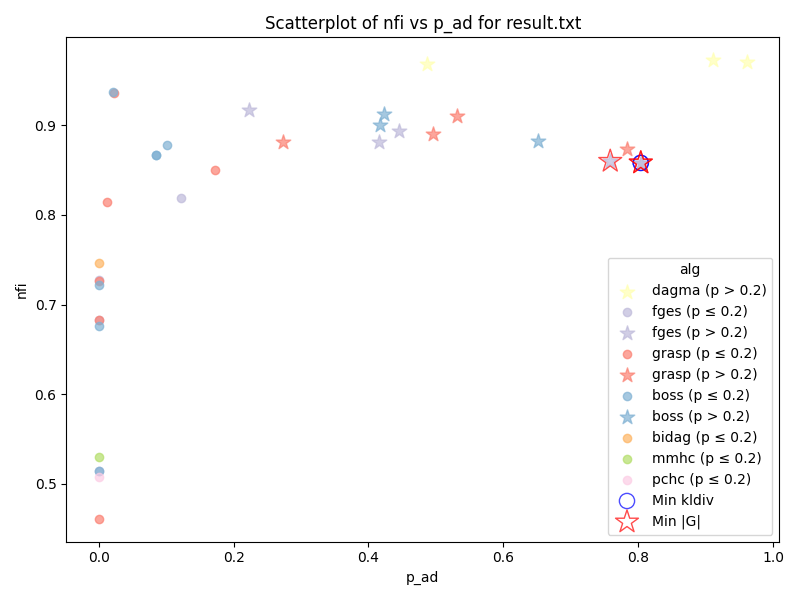}
    \caption{NFI: higher is better}
    \label{fig:us_crime_nfi}
    \end{subfigure}
    \caption{For the US Crime data, scatter plots of four statistics for the estimated CPDAGs against the Kullback-Leibler divergence ($kldiv$) of a 20-bin histogram of the Markov checker p-values for each algorithm variant from expected p-values from a 20-bin $U(0, 1)$ histogram.}
    \label{fig:us_crime}
\end{figure*}

\section{Discussion}

For the simulation data, regarding Figure \ref{fig:result_20_5_num_edges}, plotting the number of edges against $p\_ad$, the CAFS procedure suggests that we should consider the set of starred models and then choose some set of models with lower $|G|$. Looking at the scatter plot, the model with the least $|G|$ is a GRaSP variant. If we were to look at the full results, we could easily pick out which model this is, and if this were an empirical investigation, this would be a warrant to use one of these minimal or nearly minimal models for empirical investigation.

In Figure \ref{fig:result_20_5_bic}, we see that the models that the Markov checker selects are among those with the highest BIC scores. Note that BIC does not provide a threshold for rejection; in use for empirical models, one tries to maximize the BIC score, though the score itself does not specifically endorse or reject any model. Nevertheless, the Markov checker for this comparison picks out models with the highest score, suggesting that the Markov checker is able to pick out maximally explanatory models. The plot is readily divided into points along the y-axis and points that are horizontally arrayed across the top. The horizontally arranged points consist almost entirely of points that have significant Anderson-Darling uniformity p-values with $p > 0.05$. All are starred, including the CAFS point, the min $kldiv$ point, and the point for the true model.

Regarding Figure \ref{fig:result_20_5_f1}, looking just at the adjacencies of the model, we see that the Markov checker picks out models with among the highest F1 balance of precision and recall, with the F1 score of the CAFS model on the lower end of these, highlighting a recommendation discrepancy between adjacency F1 and the BIC score. Regarding Figure \ref{fig:result_20_5_shd}, disregarding models that are not legal CPDAGs, we see that the Markov checker picks out models with low structural Hamming distances. Again, the CAFS model does not have the lowest starred SHD score, suggesting that CAFS prefers to optimize BIC. Regarding Figures \ref{fig:result_20_5_cfi}, for the Comparative Fit Index (CFI), and Figure \ref{fig:result_20_5_nfi}, for the Bentler-Bonett Normed Fit Index, we see that CAFS picks a model with one of the highest scores in each case, very close to the score for the true model. Generally speaking, for both the CFI and the NFI, a value of $> 0.9$ is considered a good fit and a value $> 0.95$ is considered an excellent fit \citep{hu1999cutoff}. From the data, we see that the CAFS model has a CFI score of 0.9761 and an NFI score of about 0.9378, which are judged to have an excellent fit.

For an empirical example, a DAGMA variant in Figure \ref{fig:us_crime_num_edges} returns a graph with around 60 edges that passes Markov; there are models with fewer edges that look to be preferable, and the CAFS model has among the fewest number of edges for the starred models. The BIC plot bears this out; the CAFS model has among the highest scores. Some models have higher BIC scores, suggesting that BIC could also be used as a principle to choose among the starred models, though the sample size is small, suggesting that slightly higher BIC scores may not indicate better models. This is further borne out by the plot for CFCI and NFI. We can look up the CAFS model here; It turns out to be a model with 19 edges given by FGES; Three parameterizations of FGES yield the same model ($\lambda$ = 1.75, 2.0, and 2.50); this has a CFI of 0.9443 and an NFI of 0.8597. We show this model in Figure \ref{fig:us_crime_fges_penalty_2}. Since there are other models passing Markov that are close to the CAFS model; these might be considered plausible alternatives, and these models could also be looked up. It is useful to note that despite the small sample size, there are many models rejected by the Markov Checker, suggesting that the Markov Checker is effective in rejecting less plausible models.

\begin{figure}
    \centering
    \includegraphics[width=0.7\linewidth]{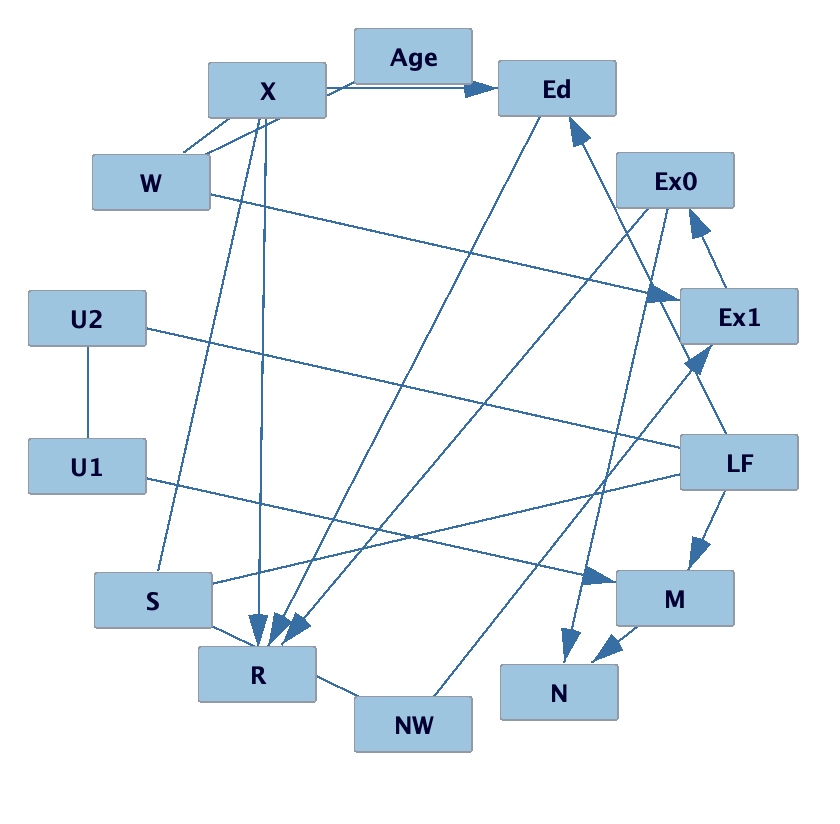}
    \caption{The result of applying the FGES algorithms with $\lambda = 2$ to the US Crime data. The rationale for selecting this model from among the models produced by various algorithms is given in the text.}
    \label{fig:us_crime_fges_penalty_2}
\end{figure}

\section{Conclusion}

DAG models make numerous claims about the independence and conditional independence of variables, so checking Markov for such models given a dataset is something that a domain expert can do without reference to ground truth. If one does not have enough conditional independence p-values to get a good Markov check test, one can generate more by calculating the p-values for these conditional independence tests for replacement random subsets, although, as mentioned in Section \ref{sec:overlap}, there is a trade-off of the number of additional p-values generated to the ability to detect dependencies. 

We give a Markov test based on the uniformity of the p-values obtained by such conditional independence tests. This uniformity can be explored across various algorithms with various tuning parameter settings; combinations that do not satisfy Markov with minimal edge counts can be rejected without knowing the ground truth as not being in a correct MEC, leaving a smaller set of plausible models for researchers to consider. Combined with Raskutti and Uhler's advice for looking for frugal models, this expands the toolkit for domain researchers, and it provides a tool that can be applied even to large models, which model p-value cannot, and it gives a ``thumb-up''/``thumbs-down'' assessment for models, which BIC or model scores cannot, being relative assessments.

The success of CAFS is not guaranteed. First, frugal or nearly frugal models need to be in the list of models inferred by algorithms. As a result, for very dense models, unless very good models exist in the list of available models, models may be accepted with lower precision and recall for adjacencies and arrowheads than desired. In our simulations, BiDAG and some BOSS and GRaSP results for the linear Gaussian case is that CAFS sometimes rejects the absolute best models by accuracy as having slightly too many edges compared to models frugally passing a Markov check, suggesting that at finite sample sizes the leaving out of weak edges cannot be detected due to insufficient power. Nevertheless, the models that CAFS accepts are generally quite good as long as algorithms are included in the comparison that yield frugal CPDAGs. In addition, the Markov check eliminates many models with inferior results, which means that the Markov check is doing its job.

Further exploration is needed. Several such points are noted above, but we may add at least the following. First, checking the Markov condition in the way we recommend requires a fast conditional independence test; general tests of conditional independence are still cubic in sample size, so they are not suitable for quick checking of the uniformity of p-values, which one can do in other cases. A remedy to this is to find ways to parallelize these tests with large machines; other possible remedies are noted above. Second, frugality may not be the only assumption one wishes to test for empirical data. For example, if one assumes linearity and non-Gaussianity, edges that cannot be oriented in a CPDAG may nevertheless be orientable, and similarly of the data are nonlinear or heteroskedastic. Third, the fact that we can pick out particularly good results from a list of algorithms and algorithm parameterizations suggests that other CPDAG algorithms in the literature may also succeed in producing frugal models; further exploration is warranted. For this reason, it would make sense to include this check in causal software-checking tools. Finally, the Markov checking software can be expanded to non-DAG models, e.g. latent variable models such as Partial Ancestral Graphs (PAGs), \cite{spirtes2000causation}.

\section{Acknowledgements}

We are thankful for comments on the content from Wai-Yin Lam and Clark Glymour. The original request for a check of the Markov condition was due to Michael Konrad, who has offered many useful comments as well. Teddy Seidenfeld gave us reference to the proof in \citep{casella2021statistical} that the p values for all distributions with continuous CDFs (not just invertible ones) are distributed as U(0, 1).

JR was supported by the US Department of Defense under Contract Number FA8702-15-D-0002 with Carnegie Mellon University for the operation of the Software Engineering Institute. BA was supported by the US National Institutes of Health under the Comorbidity: Substance Use Disorders and Other Psychiatric Conditions Training Program T32DA037183. PS was supported by the National Institutes of Health (NIH) under Contract R01HL159805, and grants from Apple Inc., and KDDI Research Inc.

\bibliography{refs}

\end{document}